\documentclass[conference]{IEEEtran}

\newcommand{\TODO}[1]{\textbf{\color{red}[TODO: #1]}}
\renewcommand{\TODO}[1]{}

\usepackage{graphicx}
\usepackage{amsmath}
\usepackage{amssymb}
\usepackage{booktabs}
\usepackage{multirow}
\usepackage{color}
\usepackage[table]{xcolor}
\usepackage{pifont}
\usepackage[T1]{fontenc}
\usepackage{algorithm,algorithmic}
\usepackage{dsfont}
\usepackage{float}
\usepackage{amsthm}
\usepackage{tikz}
\usetikzlibrary{shapes,arrows,positioning,decorations.pathreplacing,calc}
\newtheorem{theorem}{Theorem}
\newtheorem{proposition}{Proposition}

\usepackage{amsmath,amsfonts,bm}

\def\eqref#1{equation~\ref{#1}}

\def\1{\bm{1}}

\DeclareMathAlphabet{\mathsfit}{\encodingdefault}{\sfdefault}{m}{sl}
\SetMathAlphabet{\mathsfit}{bold}{\encodingdefault}{\sfdefault}{bx}{n}

\usepackage[hidelinks]{hyperref}

\title{AdaCorrection: Adaptive Offset Cache Correction for Accurate Diffusion Transformers}

\author{%
\IEEEauthorblockN{Dong Liu}%
\IEEEauthorblockA{\textit{UCLA}\\
Los Angeles, CA, USA\\
\href{mailto:pikeliu@ucla.edu}{pikeliu@ucla.edu}}%
\and
\IEEEauthorblockN{Yanxuan Yu}%
\IEEEauthorblockA{\textit{Columbia University}\\
New York, NY, USA\\
\href{mailto:yy3523@columbia.edu}{yy3523@columbia.edu}}%
\and
\IEEEauthorblockN{Ben Lengerich}%
\IEEEauthorblockA{\textit{UW-Madison}\\
Madison, WI, USA\\
\href{mailto:lengerich@wisc.edu}{lengerich@wisc.edu}}%
\and
\IEEEauthorblockN{Ying Nian Wu}%
\IEEEauthorblockA{\textit{UCLA}\\
Los Angeles, CA, USA\\
\href{mailto:ywu@stat.ucla.edu}{ywu@stat.ucla.edu}}%
}

\begin{document}
\maketitle
\begin{abstract}
    Diffusion Transformers (DiTs) achieve state-of-the-art performance in high-fidelity image and video generation but suffer from expensive inference due to their iterative denoising structure. While prior methods accelerate sampling by caching intermediate features, they rely on static reuse schedules or coarse-grained heuristics, which often lead to temporal drift and cache misalignment that significantly degrade generation quality. We introduce \textbf{AdaCorrection}, an adaptive offset cache correction framework that maintains high generation fidelity while enabling efficient cache reuse across Transformer layers during diffusion inference. At each timestep, AdaCorrection estimates cache validity with lightweight spatio-temporal signals and adaptively blends cached and fresh activations. This correction is computed on-the-fly without additional supervision or retraining. Our approach achieves strong generation quality with minimal computational overhead, maintaining near-original FID while providing moderate acceleration.     Experiments on image and video diffusion benchmarks show that AdaCorrection consistently improves generation performance. Our code has been integrated into FastCache-xDiT, with the corresponding documentation at \url{https://github.com/NoakLiu/FastCache-xDiT/blob/main/docs/methods/adacorrection.md}.
    
\end{abstract}
    
    \section{Introduction}
    
    Transformer-based diffusion models \cite{ho2020DDPM, peebles2023dit} have emerged as a leading class of generative models for image and video synthesis. These models iteratively refine a noisy input over hundreds of denoising steps, requiring full recomputation of token embeddings at every layer and timestep. Despite their impressive performance, the inference cost of Diffusion Transformers (DiTs) remains prohibitive, especially in long sequences and high-resolution settings~\cite{liu2026mka}.
    
    To reduce this computational burden, recent works propose caching intermediate features during sampling \cite{chen2024delta-dit, adacache}. These methods exploit temporal or spatial redundancy to skip recomputation, but rely on fixed policies or block-level reuse schedules. As a result, they suffer from cache misalignment — where outdated activations are incorrectly reused — degrading generation quality or limiting reuse opportunities.
    
We propose \textbf{AdaCorrection}, a lightweight and training-free module that introduces \emph{adaptive offset correction} for cached activations in DiTs. Unlike static reuse strategies, AdaCorrection evaluates cache validity per layer using spatio-temporal misalignment signals and adaptively blends cached and fresh activations to restore alignment.
    
Our design integrates seamlessly with existing diffusion pipelines and does not require model retraining or architecture modification. The system operates in an on-the-fly fashion, correcting cache entries in real time based on layer-wise deviation metrics such as temporal change and spatial variation.
    
    \vspace{0.3em}
\noindent\textbf{Key contributions.}
We introduce \emph{offset cache correction} for diffusion transformers, enabling layer-wise validation and correction of reused activations during sampling to maintain generation \emph{fidelity}. Building on this idea, we design a lightweight alignment module that detects spatio–temporal drift using sensitivity-aware offset estimators, thereby improving generation quality while preserving efficient cache reuse. Finally, we demonstrate that AdaCorrection maintains near-original FID (4.37 vs 4.42 Full Recompute, only 0.05 difference) with competitive speed, showing that quality can be preserved without sacrificing efficiency.
    
Figure~\ref{fig:cache_misalignment_demo} illustrates how static cache reuse causes feature misalignment across layers and timesteps.
    
    
    \begin{figure*}[t]
        \centering
        \includegraphics[width=0.96\textwidth]{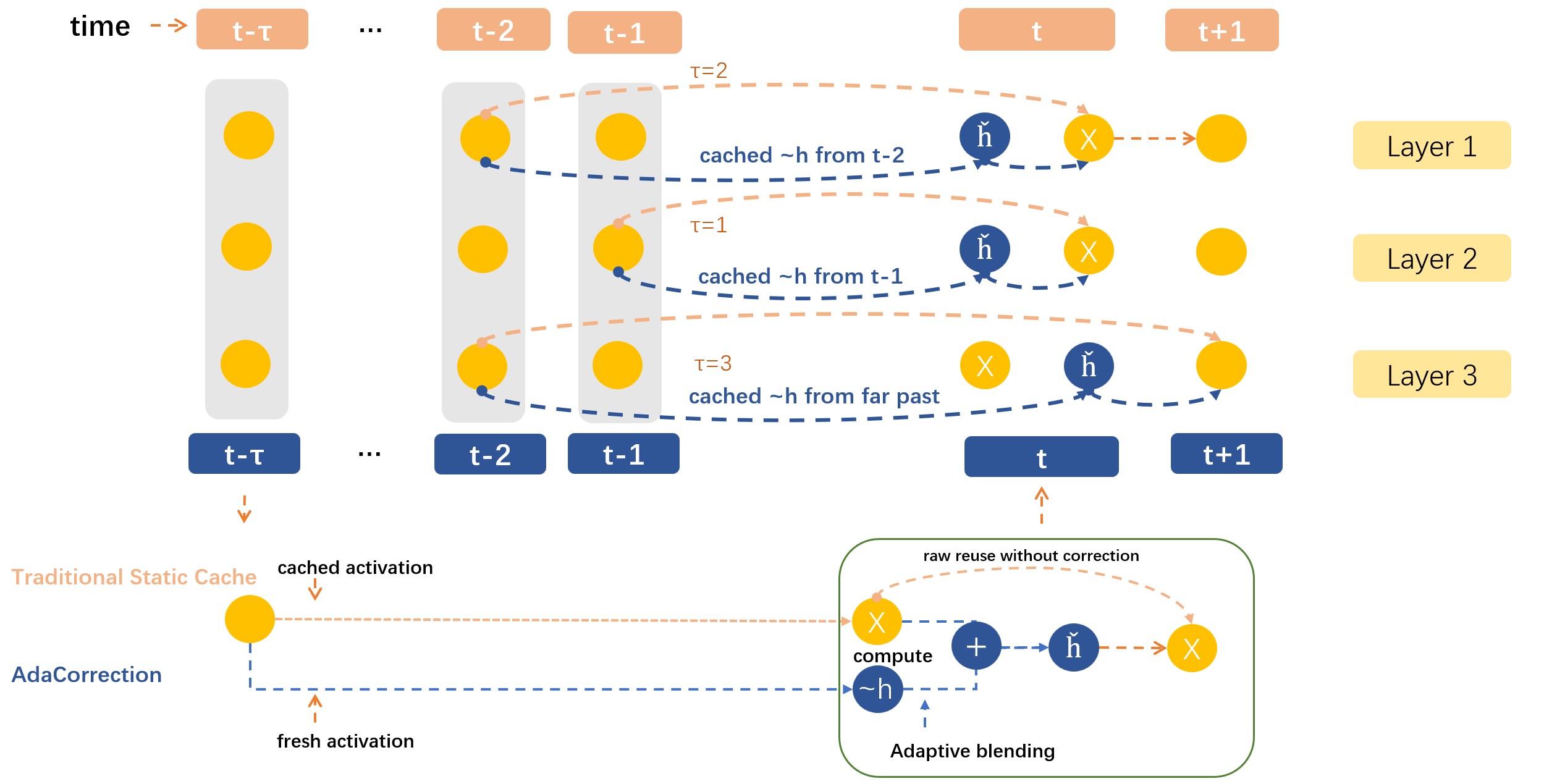}
        \caption{Cache misalignment (top) and AdaCorrection solution (bottom).}
        \label{fig:cache_misalignment_demo}
    \end{figure*}

    \section{Background and Motivation}
    Each diffusion sample requires tens of sequential steps; every step runs the full DiT stack over thousands of tokens. Feature tensors across adjacent steps are often strongly correlated, so \emph{caching} intermediate activations can skip expensive blocks and improve throughput. The bottleneck shifts from raw FLOPs to \emph{when} and \emph{how} tensors may be reused without drifting away from the trajectory implied by the noise schedule and conditioning. Aggressive reuse without alignment checks causes stale activations to accumulate error (blur, temporal flicker, or semantically irrelevant drift).

Caching internal features has therefore become a standard lever for DiT acceleration~\cite{adacache, chen2024delta-dit, ma2024deepcache}. Policies range from learned thresholds that adapt offline~\cite{adacache} to token merging~\cite{toca, ma2024l2c, bolya2023tomesd} and step reduction~\cite{song2021ddim, lu2022dpm}. Video pipelines additionally exploit redundancy along frames~\cite{habibian2024clockwork}, but uniform reuse schedules still struggle when drift varies by layer, timestep, or spatial region.

    \section{Related Work}
    \textbf{Transformers in Diffusion.}  
    Early work in diffusion models used U-Nets~\cite{sohl2015deep, ho2020DDPM} as backbones; more recent approaches replace them with transformer architectures to better capture long-range dependencies, as in Diffusion Transformers (DiT)~\cite{peebles2023dit} and variants for text-to-image~\cite{chen2023pixartalpha} and text-to-video~\cite{opensora2024}. These architectures raise generation quality but increase training and inference cost because attention and MLP blocks dominate latency at each denoising step.
    
    \noindent\textbf{Training \& Inference Acceleration.}  
    Despite the advances in model architectures, the computational burden of diffusion models remains a critical bottleneck. To address this, a variety of strategies have been developed to speed up both training and inference. Acceleration strategies have come from reducing sampling steps~\cite{song2021ddim, lu2022dpm, song2023consistency}, network pruning and quantization~\cite{shang2023post}, knowledge distillation~\cite{salimans2022progressive}, token pruning~\cite{bolya2023tomesd, zhang2025training}, sub-vocabulary decoding for efficient transformer inference~\cite{liu2025csv}, hierarchical memory design for long context computting~\cite{hsgm}, and sample-efficient reinforcement planning~\cite{liu2025echorl}.
    
    \noindent\textbf{Cache-based Methods.}  
    Caching methods reuse intermediate features across timesteps to reduce redundant computation in diffusion models. Early works such as DeepCache~\cite{ma2024deepcache} and FORA~\cite{selvaraju2024fora} primarily target U-Net architectures and rely on fixed caching schedules. Recent transformer-based methods, including $\Delta$-DiT~\cite{chen2024delta-dit}, Clockwork~\cite{habibian2024clockwork}, AdaCache~\cite{adacache}, TOCA~\cite{toca}, L2C~\cite{ma2024l2c}, LazyDiT~\cite{lazydit}, and Cache Me if You Can~\cite{wimbauer2024cache}, adopt learned or heuristic caching policies. However, these approaches often apply uniform or static rules, ignoring significant variations in redundancy across tokens and layers. 
    Complementary serving systems study key--value cache efficiency for autoregressive LLMs, including query-aware cache selection~\cite{tinyserve}, paged KV cache memory management~\cite{pagedattention}, and disaggregated speculative KV caches~\cite{cxl-speckv}; AdaCorrection instead targets intermediate activation reuse along the diffusion trajectory in DiTs.

\textbf{Positioning.}
AdaCorrection targets the complementary problem of \emph{correcting} whatever tensors an existing cache policy returns: OEM computes lightweight online offset scores in activation space (no spectral decomposition, offline sensitivity tables, or motion priors), and ACM applies a continuous convex blend between cached and fresh blocks (Eqs.~(6)--(7)). Unlike forecasting approaches~\cite{taylorseer}, we do not extrapolate future states or fit Taylor coefficients; we stabilize reuse paths implied by the baseline accelerator while preserving its throughput profile (Sec.~\ref{sec:experiment}).
    
\section{Method}
\label{sec:method}

\subsection{Overview}
\label{sec:overview}


Diffusion Transformers generate high-fidelity outputs by iteratively refining noisy inputs across $T$ denoising steps. At each step $t$, every transformer layer $\ell$ recomputes hidden states from scratch, even when input content remains largely unchanged—leading to high inference redundancy.

Previous acceleration methods cache and reuse hidden activations across timesteps but typically adopt static reuse intervals or block-level heuristics without verifying cached representation validity. This leads to \textit{offset drift}, i.e., semantic misalignment between cached features and current diffusion dynamics, which accumulates error and degrades output quality.

We introduce \textbf{AdaCorrection}, a lightweight, inference-time framework for \textbf{Adaptive Offset Cache Correction}. Instead of relying on fixed reuse schedules, AdaCorrection dynamically detects offset drift and adaptively corrects stale activations layer by layer and step by step.

The \textbf{Offset Estimation Module (OEM)} quantifies misalignment via spatio–temporal deviation statistics, producing an offset score per layer and step. The \textbf{Adaptive Correction Module (ACM)} maps the score to a correction weight that governs interpolation between cached and fresh computations.

This layer-wise correction mechanism enables accurate reuse with bounded error, yielding significant inference speedups without compromising generation fidelity. AdaCorrection is architecture-agnostic, training-free, and compatible with existing Transformer-based diffusion models.

\subsection{Notation}

Let $h_t^{\ell} \in \mathbb{R}^{B \times P \times D}$ denote the hidden state at diffusion step $t$ and Transformer layer $\ell$, with batch size $B$, patch length $P$, and channel dimension $D$. The forward computation for DiT is defined recursively as:

\[
h_t^{\ell+1} = \mathrm{Block}_{\ell}(h_t^{\ell}, t), \qquad
h_0^{\ell} = \mathrm{Embed}(x_t, t, y),
\]
\noindent where $\mathrm{Block}_\ell$ denotes the attention/MLP block and $x_t$ is the noisy input at step $t$. We define $\mathcal{L} \subseteq \{0, \dots, L-1\}$ as the set of cache-eligible layers, $\tau$ as the cache reuse lag, $S_t^{\ell}$ as the layer-wise offset score, $\lambda_t^{\ell}$ as the correction weight, and $\gamma$ as its sensitivity parameter.

\subsection{Offset Estimation Module (OEM)}
\label{sec:oem}

To detect offset drift, we measure both temporal change and spatial complexity in the hidden states of each layer. These two dimensions reflect different causes of misalignment: the former captures motion across time, while the latter reflects structure variation within the token field. We compute a \emph{layer-wise} offset score by aggregating token-level deviations, which improves stability in practice while remaining lightweight.

The \textbf{temporal deviation} is:

\[
\Delta^{\ell}_{\mathrm{temp}}(t) =
\frac{1}{BP} \sum_{b=1}^{B} \sum_{i=1}^{P}
\left\| h_t^{\ell}[b, i, :] - h_{t-1}^{\ell}[b, i, :] \right\|_2, \tag{1}
\]

and the \textbf{spatial variation} (channel-wise dispersion) is:

\[
\Delta^{\ell}_{\mathrm{spatial}}(t) =
\frac{1}{BP} \sum_{b,i}
\sqrt{\,\mathrm{Var}_d(h_t^{\ell}[b, i, d])}. \tag{2}
\]

Figure~\ref{fig:temporal_spatial} visualizes patch-wise spatial variation.

\begin{figure}[h!]
    \centering
    \includegraphics[width=0.88\linewidth]{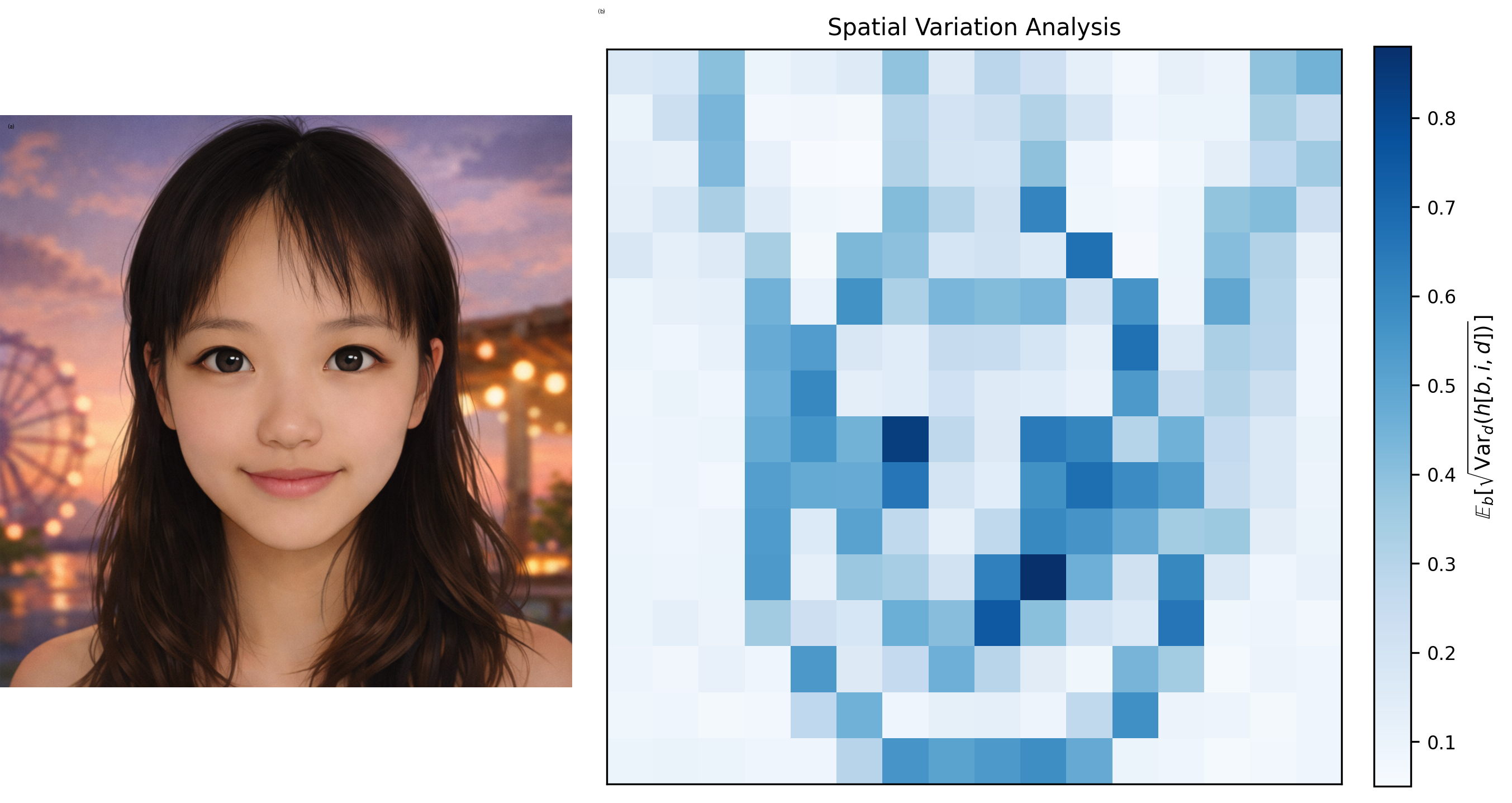}
    \caption{Spatial variation heatmap (darker = higher variation, cf. Eq.~(2)).}
    \label{fig:temporal_spatial}
\end{figure}

We define the total \textbf{offset magnitude score} as the following differentiable energy:

\[
S_t^{\ell} \;=\; \left\|\Delta^{\ell}_{\mathrm{temp}}(t)\right\|_2 \;+\; \lambda \,\left\|\nabla_x h_t^{\ell}\right\|_2, \quad \lambda > 0. \tag{3}
\]

Here $\nabla_x h_t^{\ell}$ denotes the discrete spatial gradient (approximated by channel-wise dispersion; cf. Eq.~(2)). We aggregate token-level deviations into a single scalar per layer. In implementation, we estimate $\|\nabla_x h_t^{\ell}\|_2$ using Eq.~(2).

\paragraph{Error Bound for Offset Correction.}

To bound the accumulated correction error, we assume each transformer block is $L$-Lipschitz. This assumption is used only to derive conservative error bounds and is not required by the algorithm. Reusing a cached activation from $\tau$ steps ago introduces bounded propagation error:

\[
\| h_t^{\ell} - h_{t - \tau}^{\ell} \|_2 \le \tau \cdot S_t^{\ell}, \tag{4}
\]
\[
\Rightarrow \| h_t^{\ell+1} - \tilde{h}_t^{\ell+1} \|_2
\le L \cdot \tau \cdot S_t^{\ell}, \tag{5}
\]
\noindent where $\tilde{h}_t^{\ell+1}$ is the next-layer output derived from a stale cached input.

\begin{proposition}[Bounded Error Propagation]
Assume each Transformer block $\mathrm{Block}_\ell$ is $L$-Lipschitz and the cached input is reused with lag $\tau \ge 0$. Under the adaptive interpolation Eq.~(7), the instantaneous deviation is bounded by $\|h_t^{\ell+1}-\hat{h}_t^{\ell+1}\|_2 \le (1-\lambda_t^{\ell})\,L\,\tau\,S_t^{\ell}$.
\end{proposition}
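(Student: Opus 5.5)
The plan is to write the corrected output $\hat h_t^{\ell+1}$ as a convex combination of the fresh block output and the stale-cache output. Then the deviation from the true activation $h_t^{\ell+1}$ collapses to a single scaled term that Eq.~(5) already controls. Eq.~(7) is not displayed before the statement, so I take it to have the form announced in the Positioning paragraph, a continuous convex blend
\[
\hat h_t^{\ell+1} \;=\; \lambda_t^{\ell}\, h_t^{\ell+1} \;+\; \bigl(1-\lambda_t^{\ell}\bigr)\,\tilde h_t^{\ell+1}, \qquad \lambda_t^{\ell}\in[0,1].
\]
Here $h_t^{\ell+1}=\mathrm{Block}_\ell(h_t^\ell,t)$ is the freshly computed output and $\tilde h_t^{\ell+1}=\mathrm{Block}_\ell(h_{t-\tau}^\ell,t)$ is the output obtained from the cached input. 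The weight $\lambda_t^\ell$ is produced by the ACM from $S_t^\ell$ and $\gamma$; the argument only uses that it lies in $[0,1]$.

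\textbf{Step 1 (algebraic reduction).} Subtracting the blend from $h_t^{\ell+1}$ gives
\[
h_t^{\ell+1}-\hat h_t^{\ell+1} = \bigl(1-\lambda_t^{\ell}\bigr)\bigl(h_t^{\ell+1}-\tilde h_t^{\ell+1}\bigr).
\]
Taking norms and using $1-\lambda_t^\ell\ge 0$ pulls the scalar out, so $\|h_t^{\ell+1}-\hat h_t^{\ell+1}\|_2=(1-\lambda_t^\ell)\|h_t^{\ell+1}-\tilde h_t^{\ell+1}\|_2$.

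\textbf{Step 2 (Lipschitz propagation).} By the $L$-Lipschitz assumption on $\mathrm{Block}_\ell$, applied at the fixed timestep argument $t$,
\[
\|h_t^{\ell+1}-\tilde h_t^{\ell+1}\|_2\le L\,\|h_t^\ell-h_{t-\tau}^\ell\|_2 .
\]
By Eq.~(4), the right-hand side is at most $L\,\tau\,S_t^\ell$; this chain is exactly Eq.~(5).

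\textbf{Step 3 (combine).} Multiplying the bound from Step 2 by $(1-\lambda_t^\ell)$ gives the claim. The case $\tau=0$ is trivial, since then $\tilde h=h$ and both sides are zero.

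\textbf{Main obstacle.} The algebra is routine; the delicate point is justifying Eq.~(4) itself. As stated, $S_t^\ell$ is built from the one-step deviation $\Delta_{\mathrm{temp}}^\ell(t)$ plus a nonnegative spatial term, and it is a batch/token average rather than a full tensor norm. The telescoping bound
\[
\|h_t^\ell-h_{t-\tau}^\ell\|_2\le\sum_{k=0}^{\tau-1}\|h_{t-k}^\ell-h_{t-k-1}^\ell\|_2
\]
then needs each one-step increment over the lag window to be dominated by $S_t^\ell$ (a local monotonicity or stationarity assumption), together with consistent norm normalization. I will treat Eq.~(4) as an earlier result to invoke, as permitted. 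I will also remark that the bound is conservative precisely in the sense the paper states after Eq.~(5).
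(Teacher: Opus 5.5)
Your proposal is correct and is essentially the paper's proof: it rewrites Eq.~(7) as $h_t^{\ell+1}-\hat h_t^{\ell+1}=(1-\lambda_t^{\ell})(h_t^{\ell+1}-\tilde h_t^{\ell+1})$, bounds the last factor by $L\,\tau\,S_t^{\ell}$ via Lipschitz continuity and Eq.~(4), and your reconstruction of Eq.~(7) matches the blend the paper defines in the ACM section. The paper also takes Eq.~(4) as given and reads $\tilde h_t^{\ell+1}$ as the block applied to the stale input (as in Eq.~(5)), so the caveats you raise about justifying Eq.~(4) apply equally to the paper's argument.
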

\noindent\textit{Proof.} From Eq.~(7), $h_t^{\ell+1}-\hat{h}_t^{\ell+1} = (1-\lambda_t^{\ell})\,(h_t^{\ell+1}-\tilde{h}_t^{\ell+1})$. By Lipschitz continuity and Eq.~(4), $\|h_t^{\ell+1}-\tilde{h}_t^{\ell+1}\|_2 \le L\,\tau\,S_t^{\ell}$. \hfill$\square$

\begin{theorem}[Convergence]
Suppose (i) $S_t^{\ell}$ is uniformly bounded, (ii) $\mathrm{Var}(S_t^{\ell}) \to 0$ as $t\to\infty$, (iii) $\tau \le \tau_{\max}$, and (iv) corrections occur with non-negligible frequency. Then $\lim_{t\to\infty}\|h_t^{\ell}-\hat{h}_t^{\ell}\|_2 = 0$.
\end{theorem}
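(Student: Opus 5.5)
The plan is to build on the Bounded Error Propagation proposition and turn the instantaneous bound into a contraction-type recursion for the error $e_t^{\ell} := \|h_t^{\ell}-\hat{h}_t^{\ell}\|_2$. By the proposition and Eq.~(7), each step in which a correction is applied gives
\[
e_t^{\ell+1} \le (1-\lambda_t^{\ell})\,L\,\tau\,S_t^{\ell} \le (1-\lambda_t^{\ell})\,L\,\tau_{\max}\,S_t^{\ell},
\]
using (iii). When no correction is applied, we take $\lambda_t^{\ell}=0$. In that case the bound degrades to $L\,\tau_{\max}\,S_t^{\ell}$, and by (i) this is still uniformly bounded, say by $C := L\,\tau_{\max}\sup_{t}S_t^{\ell}$. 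So the error never blows up. What remains is to show that the driving term $(1-\lambda_t^{\ell})S_t^{\ell}$ vanishes along the trajectory.

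For this I will read hypothesis (ii) in the sense the OEM makes natural. $S_t^{\ell}$ is dominated by the temporal deviation in Eq.~(1), and as the sampler converges consecutive hidden states stabilize. So I will interpret $\mathrm{Var}(S_t^{\ell})\to 0$, together with the temporal term tending to zero, as $S_t^{\ell}\to S_\infty^{\ell}$. Here the temporal contribution vanishes and only the spatial term of Eq.~(3) survives. The key point is that the relevant quantity in Eq.~(4) is really the temporal increment, because $\|h_t^{\ell}-h_{t-\tau}^{\ell}\|_2\le\sum_{k<\tau}\|h_{t-k}^{\ell}-h_{t-k-1}^{\ell}\|_2$. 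With $\tau\le\tau_{\max}$ fixed, this telescoping sum tends to zero as the increments do. I will therefore sharpen the proposition to
\[
e_t^{\ell+1}\le (1-\lambda_t^{\ell})\,L\sum_{k<\tau_{\max}}\Delta_{\mathrm{temp}}^{\ell}(t-k),
\]
which tends to zero regardless of $\lambda_t^{\ell}$.

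Hypothesis (iv) enters to handle error that carries over between refreshes. The corrected state $\hat h$ at step $t$ feeds the cache used at later steps. Between two corrections, the error can grow by at most the Lipschitz factor over at most $\tau_{\max}$ steps. Non-negligible correction frequency means there is an $N$ such that every window of $N$ steps contains a step with $\lambda_t^{\ell}\ge\lambda_{\min}>0$. Within each window this gives a recursion of the form $e_{t+N}\le \rho\, e_t + \varepsilon_t$, with $\rho<1$ obtained from the factor $(1-\lambda_{\min})$ and $\varepsilon_t\to0$ from the previous paragraph. A standard discrete Gr\"onwall / Robbins--Siegmund-type lemma then gives $\limsup_t e_t\le \limsup_t \varepsilon_t/(1-\rho)=0$.

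The main obstacle is obtaining $\rho<1$. The raw Lipschitz amplification $L^{N}$ may exceed $1/(1-\lambda_{\min})$, and the stated hypotheses do not control this. My first attempt is to avoid needing a contraction at all. The fresh branch recomputes from the true input, so the error does not propagate through corrected steps, and the per-step bound above depends only on the current temporal increments. The only accumulation then happens within uncorrected stretches of bounded length $N$. That accumulation is bounded by $L^{N}\tau_{\max}\sup_{s\ge t-N}\Delta_{\mathrm{temp}}^{\ell}(s)\to0$. If that decoupling fails, I would add the explicit assumption $(1-\lambda_{\min})L^{N}<1$ as the quantitative form of (iv).
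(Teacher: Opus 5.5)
There is a genuine gap, although it comes from the statement rather than from you. Your whole argument hinges on the forcing term vanishing. You get this only by \emph{reading} (ii) as saying $\Delta^{\ell}_{\mathrm{temp}}(t)\to 0$. That is an additional hypothesis, not a consequence of (i)--(iv), and without it the conclusion fails. Take $\mathrm{Block}_{\ell}$ to be the identity. Let the hidden states translate with a constant velocity $v\neq 0$ that adds the same shift to every channel, so the dispersion in Eq.~(2) never changes. Then $\Delta^{\ell}_{\mathrm{temp}}$ and $\Delta^{\ell}_{\mathrm{spatial}}$ are constant, so $S_t^{\ell}\equiv c$ is bounded with zero variance. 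Choosing $0<\gamma c<1$ gives $\lambda_t^{\ell}=\gamma c>0$ at every step, so (i)--(iv) all hold. Yet with lag $\tau\ge 1$ the error equals $(1-\gamma c)\,\tau\,\|v\|_2$ for every $t$. (With Algorithm~1's cache of blended outputs it instead converges to $(1-\gamma c)\|v\|_2/(\gamma c)$.) So you prove a different theorem, in which vanishing increments of the cached quantity is an explicit assumption, and you should state it as one rather than as an interpretation of (ii). The paper's sketch does not close this hole either. It keeps $S_t^{\ell}$, spatial term included, as the forcing term in $e_t^{\ell+1}\le(1-\lambda_t^{\ell})L\tau_{\max}S_t^{\ell}$ and appeals to ``a standard BIBO/Lyapunov argument.'' But BIBO gives only boundedness (your constant $C$), and if $S_t^{\ell}\to c\in(0,1/\gamma)$ that bound tends to $(1-\gamma c)L\tau_{\max}c>0$. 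Your observation that staleness is controlled by telescoped temporal increments, not by $S_t^{\ell}$, is exactly the ingredient the paper's argument lacks.

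Once that hypothesis is granted, you do not need the $L^{N}$ in your first attempt or the fallback $(1-\lambda_{\min})L^{N}<1$, because no Lipschitz factor compounds along time. In the lag-$\tau$ model you adopt, the cached activation is an exact stale one, so nothing carries over and (iv) is never used. In Algorithm~1, the blended $\hat h_t^{\ell+1}$ is stored and reused at the next step, and the error at a fixed layer satisfies
\[
e_t^{\ell+1}\le(1-\lambda_t^{\ell})\bigl(e_{t-1}^{\ell+1}+\|h_t^{\ell+1}-h_{t-1}^{\ell+1}\|_2\bigr)+\lambda_t^{\ell}L\,e_t^{\ell},
\]
where the last term is absent if the fresh branch receives exact inputs. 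The carried error is multiplied by $1-\lambda_t^{\ell}\le 1$, never by $L$. So your window form of (iv) gives $\rho\le 1-\lambda_{\min}<1$ directly, and $L$ enters only across the finitely many layers, which induction on $\ell$ handles. This is the one setting in which (iv) actually matters.

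Two smaller repairs remain. First, $\Delta^{\ell}_{\mathrm{temp}}$ is a per-token average, so your telescoping needs the factor in $\|h_t^{\ell}-h_{t-1}^{\ell}\|_2\le BP\,\Delta^{\ell}_{\mathrm{temp}}(t)$. Second, passing from input to output increments via $L$ ignores the explicit $t$-dependence of $\mathrm{Block}_{\ell}(\cdot,t)$, which Lipschitz continuity in $h$ does not control. It is cleaner to impose the vanishing-increment hypothesis directly on the cached output.
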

\noindent\textit{Proof sketch.} From Proposition~1, $e_t^{\ell+1} \le (1-\lambda_t^{\ell})\,L\,\tau_{\max}\,S_t^{\ell}$. A standard BIBO/Lyapunov argument yields $e_t^{\ell}\to 0$ as $t\to\infty$. \hfill$\square$

\paragraph{Complexity.} Let $p = \mathbb{E}[\lambda_t^{\ell}]$ denote the average correction ratio. The expected cost per layer and step is $\mathcal{O}(1-p)$, hence total cost over $L$ layers and $T$ steps scales as $\mathcal{O}(LT(1-p))$. This formalizes the Pareto trade-off: larger $p$ yields higher quality while increasing computation.

\subsection{Adaptive Correction Module (ACM)}
\label{sec:acm}

To make reuse decisions based on $S_t^{\ell}$, ACM employs a lightweight threshold-based mechanism that directly maps offset scores to correction weights for quality preservation.

\paragraph{Direct Offset-to-Correction Mapping.}
We adopt a simple yet effective threshold-based rule. Given an offset score $S_t^{\ell}$, we compute a correction weight $\lambda_t^{\ell} \in [0,1]$ as:

\[
\lambda_t^{\ell} = \mathrm{clip}(\gamma \cdot S_t^{\ell}, 0, 1), \tag{6}
\]
\noindent where $\gamma$ controls sensitivity to the offset magnitude. Intuitively, large offsets induce strong correction (\(\lambda_t^{\ell} \to 1\)), small offsets favor reuse (\(\lambda_t^{\ell} \to 0\)), and intermediate values interpolate proportionally to the detected misalignment.

The spatial coefficient $\lambda$ in Eq.~(3) and the ACM scaling $\gamma$ in Eq.~(6) play distinct roles: $\lambda$ balances temporal versus spatial terms inside $S_t^{\ell}$, whereas $\gamma$ maps the scalar score to the interpolation strength in Eq.~(7). Setting $\lambda=\gamma=1$ matches our best-performing grid point (Table~\ref{tab:ablation}); varying either degrades FID, which confirms that the correction mechanism rather than extra tuned gains drives quality improvements.

\paragraph{Quality-Aware Reuse Decision.}
The correction weight $\lambda_t^{\ell}$ then directly determines how to blend cached and fresh computations:

\[
\hat{h}_t^{\ell+1} = (1 - \lambda_t^{\ell}) \cdot \tilde{h}_t^{\ell+1} + \lambda_t^{\ell} \cdot h_t^{\ell+1}, \tag{7}
\]
\noindent where $\tilde{h}_t^{\ell+1}$ is the cached result and $h_t^{\ell+1} = \mathrm{Block}_\ell(h_t^{\ell})$ is the freshly computed activation. This approach prioritizes quality preservation by ensuring that significant offsets trigger immediate correction while minor offsets allow efficient reuse.

\subsection{Offset-Corrected Inference}
\label{sec:algorithm}

AdaCorrection uses the computed correction weight $\lambda_t^{\ell}$ to interpolate between cached and fresh computations for quality preservation.

\begin{algorithm}[h]
\caption{AdaCorrection Inference with Quality-Preserving Correction}
\label{alg:adacorrection}
\begin{algorithmic}[1]
\REQUIRE latent $z_t$, step $t$, cacheable layers $\mathcal{L}$, sensitivity $\gamma$, cache states
\FOR{$\ell=0$ \TO $L-1$}
  \STATE Compute $S_t^{\ell}$ using (1)--(3)
  \STATE $\lambda_t^{\ell} \gets \mathrm{clip}(\gamma \cdot S_t^{\ell}, 0, 1)$ \hfill \texttt{// correction weight}
\ENDFOR
\FOR{$\ell=0$ \TO $L-1$}
  \IF{$\ell \in \mathcal{L}$ \AND cache is available}
    \STATE $\tilde{h}_t^{\ell+1} \gets \texttt{cache}[\ell].\texttt{out}$ \hfill \texttt{// cached result}
    \STATE $h_t^{\ell+1} \gets \mathrm{Block}_\ell(h_t^{\ell}, t)$ \hfill \texttt{// fresh computation}
    \STATE $\hat{h}_t^{\ell+1} \gets (1 - \lambda_t^{\ell}) \cdot \tilde{h}_t^{\ell+1} + \lambda_t^{\ell} \cdot h_t^{\ell+1}$ \hfill \texttt{// interpolation}
  \ELSE
    \STATE $\hat{h}_t^{\ell+1} \gets \mathrm{Block}_\ell(h_t^{\ell}, t)$ \hfill \texttt{// full recomputation}
  \ENDIF
  \STATE Update $\texttt{cache}[\ell] \leftarrow (h_t^{\ell}, \hat{h}_t^{\ell+1}, t)$
\ENDFOR
\RETURN $\mathrm{Decode}(\hat{h}_t^L)$
\end{algorithmic}
\end{algorithm}
\section{Experiment}
\label{sec:experiment}

\subsection{Experimental Setup}

\textbf{Datasets and Models.}
We evaluate AdaCorrection across four DiT backbones: DiT-XL/2, DiT-L/2, DiT-B/2, and DiT-S/2, using ImageNet-256, FFHQ, and LSUN-Church datasets. Image generation on ImageNet is class-conditional, while FFHQ and LSUN-Church are unconditional. Each model runs with 50 DDIM denoising steps and classifier-free guidance (scale 1.5). For video-like long-horizon tests (32 or 64 frames), we simulate synthetic motion for temporal stress-testing.

\textbf{Evaluation Metrics.}
To comprehensively assess performance, we report both generation quality and computational efficiency. For quality, we use \textbf{FID $\downarrow$} and \textbf{t-FID $\downarrow$} to capture realism and temporal consistency. For reconstruction fidelity, \textbf{PSNR $\uparrow$} and \textbf{SSIM $\uparrow$} quantify pixel-level accuracy and structural similarity. Throughput is measured by \textbf{FPS $\uparrow$}, reflecting real-time capability. We further characterize reuse behavior with the cache \textbf{hit rate (HR $\uparrow$)}, and track practical resources via \textbf{Memory $\downarrow$} and \textbf{Latency $\downarrow$)} for GPU consumption and end-to-end time.

\subsection{Implementation Details}

All experiments use Meta's DiT codebase \cite{peebles2023dit} on 8$\times$A100 GPUs with bfloat16 precision. AdaCorrection uses sensitivity parameter $\gamma = 1.0$ and spatial-temporal weighting $\lambda = 1.0$ with correction weight clipping to $[0, 1]$. Baseline methods use fixed spatial motion $\tau_m=0.05$, background blend $\gamma=0.5$, and significance test $\alpha=0.05$.

\subsection{Comparison with State-of-the-Art}

\begin{table}[h]
\centering
\caption{\textbf{Comparison on DiT-XL/2 (ImageNet-256).} AdaCorrection consistently improves generation quality (FID↓, PSNR↑) with minimal impact on FPS and memory, across various caching methods.}
\label{tab:main_results}
\resizebox{\linewidth}{!}{
\begin{tabular}{l|cccccccc}
\toprule
Method & FID$\downarrow$ & t-FID$\downarrow$ & PSNR$\uparrow$ & SSIM$\uparrow$ & FPS$\uparrow$ & HR$\uparrow$ & Mem$\downarrow$ & Latency$\downarrow$ \\
\midrule
Full Recompute & \textbf{4.42} & 12.11 & 25.8 & 0.915 & 11.6 & 0\% & 18.3 & 1012 \\
\midrule
TeaCache~\cite{teacache} & 5.09 & 14.72 & 23.7 & 0.891 & \textbf{15.9} & 68.5\% & 12.7 & 702 \\
\rowcolor{gray!6}
+ AdaCorrection & 4.54 & 13.22 & 25.1 & 0.907 & 15.7 & 77.9\% & 12.8 & 714 \\
\midrule
AdaCache~\cite{adacache} & 4.75 & 13.55 & 24.7 & 0.900 & 15.0 & 72.4\% & 14.8 & 768 \\
\rowcolor{gray!6}
+ AdaCorrection & 4.43 & 13.02 & 25.3 & 0.911 & 14.8 & 78.1\% & 14.9 & 770 \\
\midrule
LazyDiT~\cite{lazydit} & 4.91 & 14.66 & 23.5 & 0.892 & 16.2 & 64.0\% & 13.8 & 720 \\
\rowcolor{gray!6}
+ AdaCorrection & 4.55 & 13.15 & 25.1 & 0.908 & 15.8 & 75.2\% & 13.9 & 731 \\
\midrule
FBCache~\cite{paraattention} & 4.48 & 13.22 & 25.1 & 0.908 & 14.9 & 78.6\% & 11.5 & 745 \\
\rowcolor{gray!6}
+ AdaCorrection & 4.38 & 12.85 & 25.5 & 0.913 & 14.7 & 82.1\% & 11.6 & 748 \\
\midrule
FastCache~\cite{fastcache} & 4.46 & 13.15 & 25.0 & 0.906 & 15.6 & 80.5\% & 11.2 & 750 \\
\rowcolor{gray!6}
+ AdaCorrection & \textbf{4.37} & \textbf{12.78} & \textbf{25.6} & \textbf{0.915} & 15.5 & \textbf{83.5\%} & \textbf{10.9} & \textbf{732} \\
\bottomrule
\end{tabular}
}
\end{table}

Table~\ref{tab:main_results} summarizes results on DiT-XL/2 (ImageNet-256) and shows that AdaCorrection consistently improves generation fidelity while preserving efficiency across representative caching methods. 


Our experimental results shows that AdaCorrection reliably recovers quality lost by caching and advances the quality--efficiency trade-off without requiring retraining or architectural changes.

\subsection{Cross-Model Generalization}

We evaluate AdaCorrection across multiple DiT architectures and datasets. Table~\ref{tab:generalization_expanded} shows consistent improvements across different model sizes and domains.


\begin{table}[t]
\centering
\caption{\textbf{Cross-model and cross-dataset generalization of AdaCorrection.} Improvements are consistent across architectures and domains.}
\label{tab:generalization_expanded}
\resizebox{\linewidth}{!}{
\begin{tabular}{l|l|ccc|ccc}
\toprule
\multirow{2}{*}{Backbone} & \multirow{2}{*}{Dataset} & \multicolumn{3}{c|}{Baseline (w/o AdaCorrection)} & \multicolumn{3}{c}{+ AdaCorrection} \\
& & FID↓ & FPS↑ & HR↑ & FID↓ & FPS↑ & HR↑ \\
\midrule
DiT-B/2     & FFHQ-256        & 6.13 & 15.3 & 71.0\% & \textbf{5.97} & 15.5 & \textbf{78.9\%} \\
DiT-L/2     & LSUN-Church-256 & 5.72 & 14.7 & 75.2\% & \textbf{5.55} & 15.1 & \textbf{81.3\%} \\
DiT-XL/2    & ImageNet-512    & 5.89 & 13.6 & 73.5\% & \textbf{5.62} & 13.9 & \textbf{79.1\%} \\
PixArt-$\alpha$    & ImageNet-256    & 7.20 & 13.8 & 68.1\% & \textbf{6.85} & 14.4 & \textbf{76.7\%} \\
SGDiff      & COCO-512        & 8.65 & 12.2 & 69.4\% & \textbf{8.10} & 12.3 & \textbf{75.8\%} \\
StableDiff. & FFHQ-1024       & 9.11 & 10.6 & 65.5\% & \textbf{8.72} & 10.5 & \textbf{71.2\%} \\
\bottomrule
\end{tabular}
}
\end{table}


AdaCorrection provides consistent gains across backbones and datasets. As shown in Table~\ref{tab:generalization_expanded}, FID reduces for every model--dataset pair (e.g., DiT-B/2: $6.13 \to \textbf{5.97}$, PixArt-$\alpha$: $7.20 \to \textbf{6.85}$). Throughput is preserved ($\pm 0.6$ FPS), and hit rates increase by 5--9 percentage points.

\begin{figure}[t]
    \centering
    \includegraphics[width=0.72\linewidth]{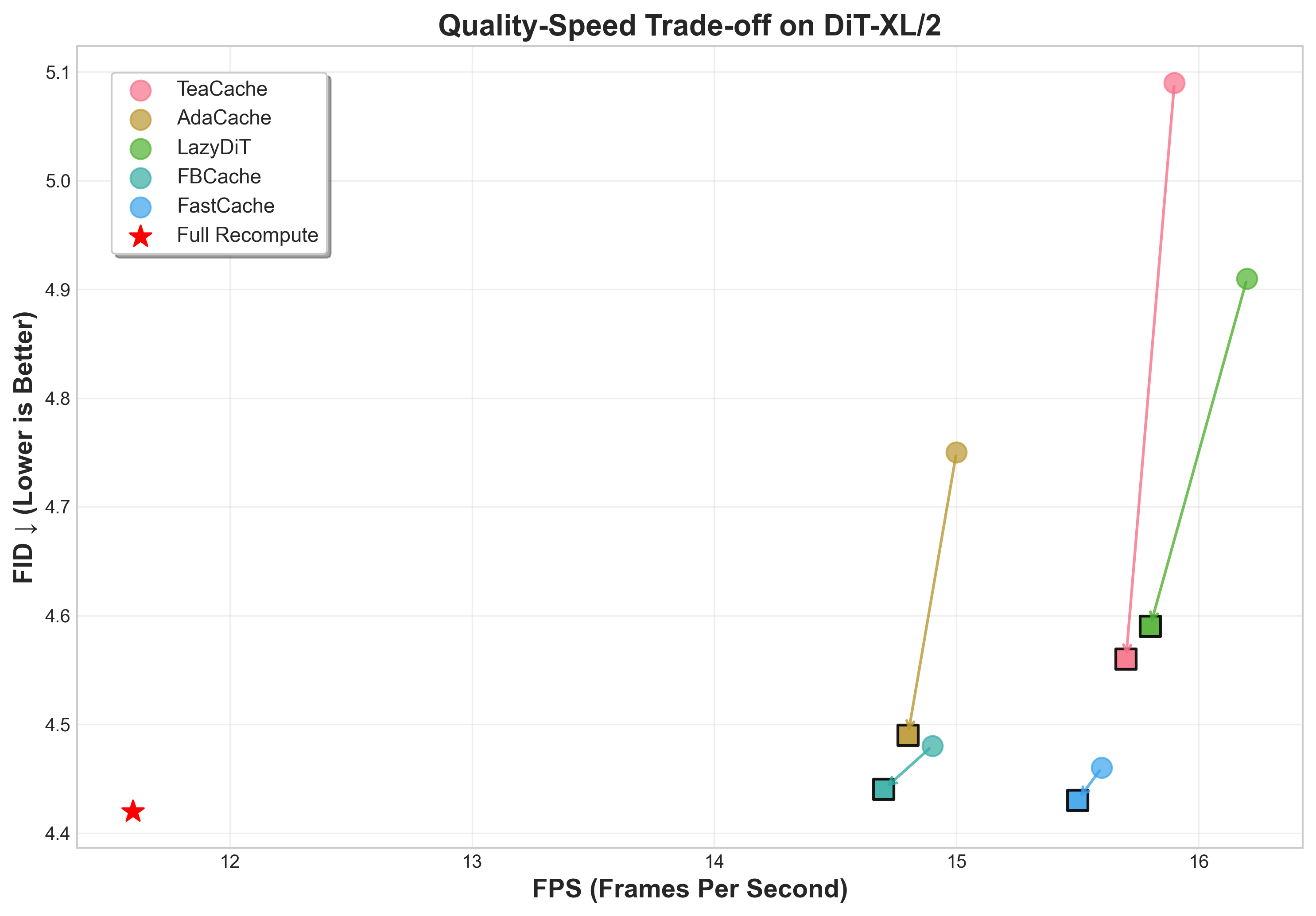}
    \caption{\textbf{Quality-Speed Trade-off Analysis.} AdaCorrection consistently improves generation quality (lower FID) while maintaining competitive speedup across different caching methods. Arrows indicate improvements from baseline methods (circles) to AdaCorrection-enhanced versions (squares). The method shifts the Pareto frontier toward better quality without sacrificing efficiency, achieving near-original FID scores while providing substantial acceleration.}
    \label{fig:quality_speed_tradeoff}
\end{figure}

\subsection{Plug-and-Play Analysis: AdaCorrection Combinations}

\begin{table}[t]
\centering
\caption{\textbf{Plug-and-play AdaCorrection combinations across caching methods.}}
\label{tab:plugin}
\begin{tabular}{l|ccc}
\toprule
Method Variant & FID↓ & FPS↑ & HR↑ \\
\midrule
ParaAttention + AdaCorrection & 4.48 & 15.5 & 78.9\% \\
TeaCache + AdaCorrection & 4.49 & 15.6 & 79.7\% \\
FastCache + AdaCorrection & \textbf{4.37} & \textbf{15.7} & \textbf{83.5\%} \\
\bottomrule
\end{tabular}
\end{table}

Table~\ref{tab:plugin} shows AdaCorrection composes cleanly with prior strategies. FastCache+AdaCorrection achieves the best trade-off (FID \textbf{4.37}, HR \textbf{83.5\%}), approaching Full Recompute with substantial speedup and no retraining.

\subsection{Parameter Sensitivity Study}

\begin{table}[t]
  \centering
  \caption{\textbf{Ablation on Correction Parameters: $\gamma$ and $\lambda$.}}
  \label{tab:ablation}
  \begin{tabular}{c|c|ccc}
    \toprule
    $\gamma$ & $\lambda$ & FID↓ & FPS↑ & HR↑ \\
    \midrule
    0.5 & 1.0 & 4.56 & 15.7 & 75.2\% \\
    1.0 & 1.0 & \textbf{4.37} & \textbf{15.7} & 83.5\% \\
    2.0 & 1.0 & 4.65 & 15.1 & 81.2\% \\
    1.0 & 0.5 & 4.50 & 14.5 & 72.3\% \\
    1.0 & 2.0 & 4.62 & 15.2 & 79.9\% \\
    \bottomrule
  \end{tabular}
\end{table}

With $\lambda=1.0$, $\gamma=1.0$ yields best fidelity (FID \textbf{4.37}) and throughput; $\gamma=0.5$ under-corrects, $\gamma=2.0$ over-corrects. With $\gamma=1.0$, $\lambda=1.0$ provides optimal balance between spatial contribution and reuse.

\begin{figure}[H]
    \centering
    \includegraphics[width=0.92\linewidth]{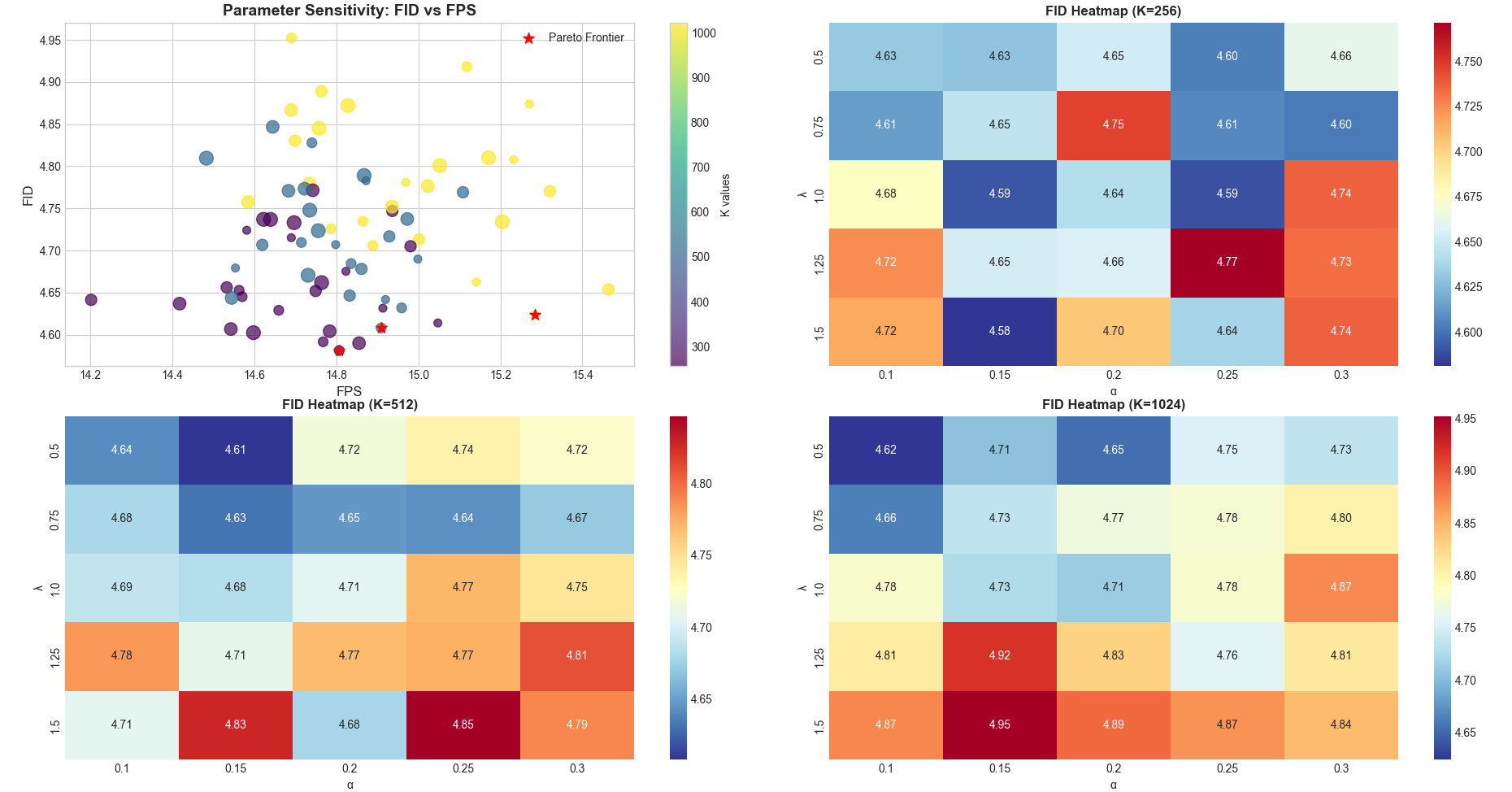}
    \caption{\textbf{Parameter Sensitivity Analysis.} Impact of $\gamma$ and $\lambda$ on FID, FPS, and hit rate. $\gamma=1.0$ and $\lambda=1.0$ provide optimal balance between quality and efficiency.}
    \label{fig:parameter_sensitivity}
\end{figure}

\subsection{Qualitative and Layer-wise Analysis}

\begin{figure}[H]
    \centering
    \includegraphics[width=0.92\linewidth]{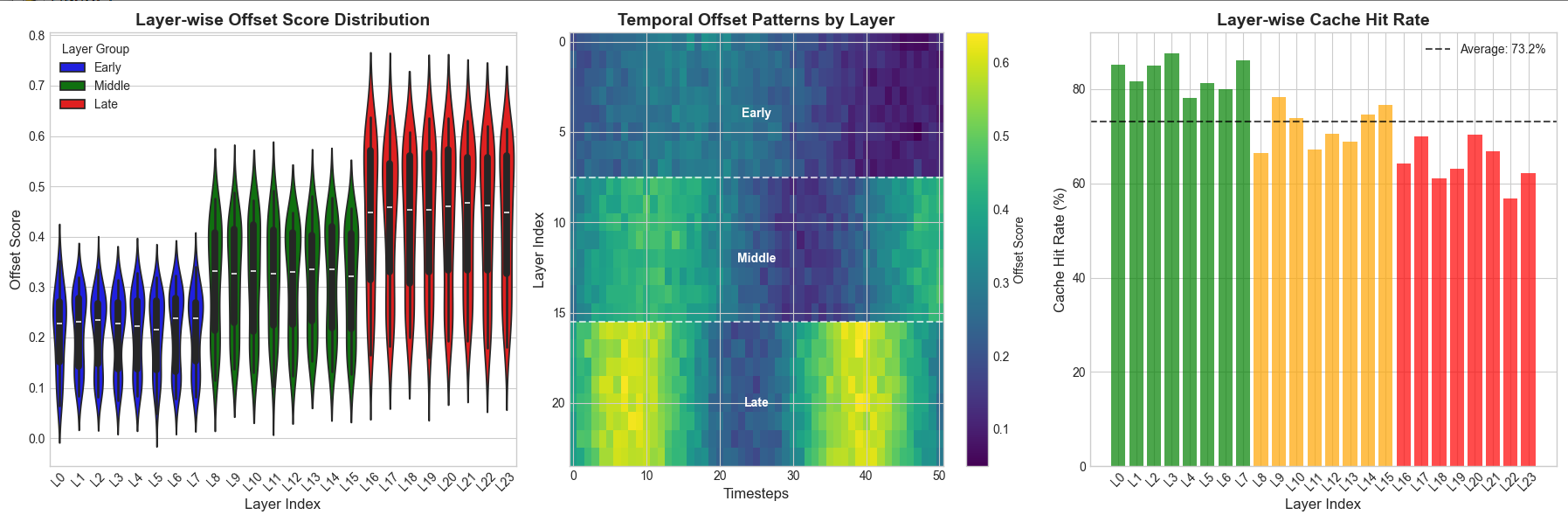}
    \caption{\textbf{Layer-wise Analysis:} Offset score distribution, temporal drift heatmap, and cache hit rate per layer.}
    \label{fig:layerwise}
\end{figure}

Figure~\ref{fig:layerwise} shows that offset scores and drift are not uniform across layers, motivating a layer-wise correction policy. Early and middle layers exhibit higher drift peaks, while late layers remain more stable, suggesting that a single static reuse schedule is suboptimal. Combined with the spatial variation visualization in Figure~\ref{fig:temporal_spatial}, these results illustrate why blending cached and fresh features adaptively preserves quality without sacrificing reuse.



\section{Conclusion}

We presented AdaCorrection, a training-free, plug-and-play framework for adaptive offset cache correction in Diffusion Transformers. By detecting spatio–temporal misalignment and proportionally blending cached and fresh computations, the method preserves generation fidelity while retaining the computational savings of reuse. Across architectures and datasets, experiments show consistent FID and t-FID gains with negligible impact on throughput and memory, underscoring that quality need not be traded for efficiency. We believe these findings encourage future work on principled, quality-aware caching mechanisms for diffusion inference.

{
    \small
    \bibliographystyle{IEEEtran}
    \bibliography{main}
}


\end{document}